\title{\LARGE \bf
Reinforcement Learning-Based Neuroadaptive Control of Robotic Manipulators under Deferred Constraints
}
\author{Hamed Rahimi Nohooji$^{1,*}$, Abolfazl Zaraki$^{2}$ and Holger Voos$^{1}$
\thanks{*This research was funded in whole, or in part, by the
Luxembourg National Research Fund (FNR), COSAMOS
Project, ref. IC22/IS/17432865/COSAMOS. 
}
\thanks{$^{1}$Hamed Rahimi Nohooji, and Holger Voos are with the Automation and Robotics Research Group, Interdisciplinary Centre for Security, Reliability and Trust, University of Luxembourg,
1855 Luxembourg, Luxembourg
        {\tt\small hamed.rahimi@uni.lu; holger.voos@uni.lu}}%
\thanks{$^{2}$Abolfazl Zaraki is with the School of Physics, Engineering and Computer Science (SPECS), and Robotics Research Group of the University of Hertfordshire, Hatfield, AL10 9AB, United Kingdom.
        {\tt\small a.zaraki@herts.ac.uk}}%
}
\newtheorem{lemma}{Lemma}
\newtheorem{remark}{Remark}
\begin{document}

\maketitle
\thispagestyle{empty}
\pagestyle{empty}

\begin{abstract} 
This paper presents a reinforcement learning-based neuroadaptive control framework for robotic manipulators operating under deferred constraints. The proposed approach improves traditional barrier Lyapunov functions by introducing a smooth constraint enforcement mechanism that offers two key advantages: (i) it minimizes control effort in unconstrained regions and progressively increases it near constraints, improving energy efficiency, and (ii) it enables gradual constraint activation through a prescribed-time shifting function, allowing safe operation even when initial conditions violate constraints.
To address system uncertainties and improve adaptability, an actor-critic reinforcement learning framework is employed. The critic network estimates the value function, while the actor network learns an optimal control policy in real time, enabling adaptive constraint handling without requiring explicit system modeling.
Lyapunov-based stability analysis guarantees the boundedness of all closed-loop signals. The effectiveness of the proposed method is validated through numerical simulations. 
\end{abstract}

\section{Introduction}

Robotic manipulators are employed in applications like medical, industrial automation, and autonomous systems, where they require to operate under state constraints that ensure both safety and efficiency. These constraints arise due to physical limitations, safety requirements, and task-specific objectives. Conventional control methods struggle to ensure stability while enforcing such constraints, particularly in uncertain and dynamic environments. To address this, Barrier Lyapunov Functions (BLFs) have been extensively utilized in constrained control due to their ability to prevent constraint violations by incorporating potential-like functions that ensure states remain within predefined bounds \cite{tee2009barrier,  nohooji2018neural}. However, traditional BLFs impose two major limitations: (i) they may apply relatively high control effort even when the system operates well within the safe zone, resulting in energy inefficiencies, and (ii) they enforce constraints immediately from the initial time, which can be problematic when the system starts outside the constraint set.  

To overcome these issues, this paper introduces a control approach that integrates an improved BLF formulation with a deferred constraint activation strategy. The proposed function addresses the limitations of conventional BLFs by introducing a progressive enforcement mechanism. Unlike standard BLFs that apply constraint-based control throughout the state space, this function ensures minimal control effort in regions where the error is small while progressively increasing control action as the system approaches constraint boundaries. This smooth enforcement mechanism reduces unnecessary energy consumption and prevents abrupt control activations \cite{liang2023zone}.
Furthermore, a shifting function with prescribed finite-time activation is introduced to handle cases where initial conditions violate constraints. This function smoothly transitions from an unconstrained state to a fully constrained one over a predefined time interval, eliminating abrupt control interventions and improving system stability.

However, ensuring constraint satisfaction in practical scenarios is further complicated by system uncertainties, unmodeled dynamics, and external disturbances, which can significantly impact control performance. Conventional model-based strategies often require precise system knowledge, making them less effective in uncertain environments. To address this, reinforcement learning (RL) provides a data-driven framework that enables adaptive control without explicit system identification \cite{sutton2018reinforcement, kaelbling1996reinforcement}. Among RL methods, actor-critic reinforcement learning is particularly effective for continuous control tasks, as it separately optimizes the control policy (actor) and the value function (critic), facilitating efficient learning and real-time adaptation \cite{grondman2012survey, nohooji2024actor}. 
By leveraging actor-critic RL, the proposed method adaptively adjusts the control law in real time, enhancing robustness against uncertainties and eliminating the need for explicit system modeling. Building on this foundation, we develop a unified reinforcement learning-based neuroadaptive constraint control framework for robotic manipulators. The key contributions are as follows:

\begin{enumerate}
\item We develop a novel barrier function with progressive constraint enforcement, integrated with a prescribed-time shifting function to enable deferred constraint activation. This formulation minimizes control effort in unconstrained regions while ensuring a smooth transition from unconstrained to fully constrained operation, thereby guaranteeing constraint satisfaction even when the initial state violates the prescribed bounds.

    \item We design an actor-critic reinforcement learning framework that enables adaptive control under uncertainties without relying on an explicit system model. In this framework, the critic network approximates the cost-to-go function while the actor network optimizes the control policy, ensuring robust and adaptive constraint handling.
    \item We provide a rigorous Lyapunov-based stability analysis that guarantees semi-global uniform ultimate boundedness of the closed-loop system. Theoretical results are validated through numerical simulations.
\end{enumerate}

The remainder of the paper is organized as follows. Section II formulates the problem and provides necessary preliminaries. Section III presents the control design and stability analysis. Section IV validates the proposed method through numerical simulations. Section V concludes the paper.

\section{Problem Formulation and Preliminaries}

\textbf{Problem Formulation.} 
Consider an \(n\)-DOF robotic manipulator described by 
\begin{equation}
\label{dyn}
M(q)\ddot{q}+C(q,\dot{q})\dot{q}+G(q)=\tau+d,
\end{equation}
where \(q\in\mathbb{R}^n\) is the joint position vector, \(M(q)\) is a uniformly positive definite inertia matrix, \(C(q,\dot{q})\) captures Coriolis and centrifugal effects, \(G(q)\) represents gravitational forces, \(\tau\) is the control input, and \(d\) is a bounded disturbance. Let \(q_d(t)\in\mathbb{R}^n\) be a smooth desired trajectory and define the time-varying constraint set
\[
X_q(t)=\{q\in\mathbb{R}^n:\underline{k}_i(t)<q_i<\bar{k}_i(t),\; i=1,\ldots,n\},
\]
where the functions \(\underline{k}_i(t)\) and \(\bar{k}_i(t)\) are pre-specified, satisfying \(\underline{k}_i(t) < \bar{k}_i(t)\) for all \(t\geq 0\). Additionally, the desired trajectory is bounded as 
\(
|q_{d,i}(t)| < k_{d,i}(t),
\)
with 
\(
\underline{k}_i(t) < k_{d,i}(t) < \bar{k}_i(t),
\)
and the desired velocity is also assumed to be bounded, i.e.,
\(
|\dot{q}_{d,i}(t)| < \bar{k}_{d,i}(t),
\)
for some function \(\bar{k}_{d,i}(t)\).
The constraints are imposed from a prescribed time \( T_c>0 \). The control objective is to design a robust neuroadaptive controller such that the tracking error \( Z_1(t) = q(t) - q_d(t) \) converges to a small neighborhood of the origin and the joint positions satisfy \( q(t) \in X_q(t) \) for all \( t \geq T_c \), irrespective of the initial condition.

\textbf{Property 1} \cite{Lee1998,Slotine1991}. For all \(q,\dot{q}\in\mathbb{R}^n\), the inertia matrix \(M(q)\) in \eqref{dyn} is symmetric and uniformly positive definite, i.e., there exist constants \(\mu_1,\mu_2>0\) such that \(\mu_1 I\preceq M(q)\preceq\mu_2 I\). Moreover, the matrix \(\dot{M}(q)-2C(q,\dot{q})\) is skew‐symmetric; that is, for every \(z\in\mathbb{R}^n\),
\(
z^T\bigl[\dot{M}(q)-2C(q,\dot{q})\bigr]z=0.
\)

\textbf{Shifting Function} \cite{song2018tracking}.
Define the shifting function 
\begin{equation}
\gamma(t)=
\begin{cases}
1-\left(\frac{T_c-t}{T_c}\right)^3, & 0\le t < T_c,\\[1mm]
1, & t\ge T_c,
\end{cases}
\end{equation}
where \(T_c>0\) is the prescribed time for full constraint activation.

\begin{lemma}
For any \(T_c>0\), the function \(\gamma(t)\) satisfies:
\begin{enumerate}[label=(\alph*)]
    \item \(\gamma(0)=0\) and \(\gamma(t)=1\) for all \(t\ge T_c\);
    \item \(\gamma(t)\) is strictly increasing on \([0,T_c]\);
    \item Its derivative, given by 
    \[
    \dot{\gamma}(t)=\frac{3(T_c-t)^2}{T_c^3},\quad 0\le t<T_c,
    \]
    is continuous and bounded for \(t\ge0\).
\end{enumerate}
\end{lemma}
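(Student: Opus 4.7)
The proof is essentially direct verification of three elementary properties of a piecewise-defined cubic. The plan is to handle each item in turn, being careful to check continuity at the junction point \(t=T_c\) where the two pieces of the definition meet, since this is the only place where something non-trivial might fail.

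For part (a), I would simply substitute. At \(t=0\) we have \((T_c-t)/T_c = 1\), so \(\gamma(0) = 1 - 1^3 = 0\). For \(t \geq T_c\) the value \(\gamma(t)=1\) holds by definition. I would also note that \(\gamma\) is continuous at \(t=T_c\), since as \(t\to T_c^-\) the cubic branch gives \(1 - 0 = 1\), matching the constant branch.

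For part (b), on the open interval \((0,T_c)\) the derivative is \(\dot\gamma(t) = 3(T_c-t)^2/T_c^3\), obtained by the chain rule applied to \(-((T_c-t)/T_c)^3\). Since \((T_c-t)^2 > 0\) on \((0,T_c)\) and \(T_c^3 > 0\), we have \(\dot\gamma(t) > 0\) on this interval, so \(\gamma\) is strictly increasing on \([0,T_c]\) by the mean-value theorem applied to the continuous extension.

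For part (c), the explicit formula \(\dot\gamma(t) = 3(T_c-t)^2/T_c^3\) is a polynomial in \(t\), hence continuous on \([0,T_c)\). For \(t > T_c\), \(\gamma\) is constant, so \(\dot\gamma(t)=0\). The only delicate point is continuity at \(t = T_c\): the left limit of \(\dot\gamma\) is \(3(T_c-T_c)^2/T_c^3 = 0\), which agrees with the right-hand value, so \(\dot\gamma\) is continuous on \([0,\infty)\). Boundedness then follows by noting that on \([0,T_c]\) the function \((T_c-t)^2\) attains its maximum \(T_c^2\) at \(t=0\), giving \(\dot\gamma(t) \leq 3/T_c\), while \(\dot\gamma(t)=0\) for \(t \geq T_c\); hence \(0 \leq \dot\gamma(t) \leq 3/T_c\) for all \(t\geq 0\). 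The only step that requires any care at all is the junction check at \(t=T_c\), but since the cubic was designed with a double root of \(\dot\gamma\) at \(T_c\), this matching is automatic — so there is no real obstacle in this proof.
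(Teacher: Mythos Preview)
Your proof is correct and complete. The paper itself does not supply a proof of this lemma; it simply refers the reader to \cite{song2018tracking}. Your direct verification of each item, including the continuity check of \(\dot\gamma\) at the junction \(t=T_c\) and the explicit bound \(0\le\dot\gamma(t)\le 3/T_c\), is exactly the natural argument and is in fact more detailed than what the paper offers.
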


\textit{Proof.}
A detailed proof is provided in \cite{song2018tracking}.

\textbf{Neural Network Approximation} \cite{yu2011advantages}. Let \(F_j:\mathbb{R}^p\to\mathbb{R}\) be a continuous function. A neural network approximates \(F_j\) as \(F_j(Z)=W_j^{*T}H_j(Z)+\varepsilon_j(Z)\), where \(Z\in\mathbb{R}^p\) is the input vector, \(W_j^*\in\mathbb{R}^k\) is the ideal weight vector, and \(H_j(Z)=[h_1(Z),\ldots,h_k(Z)]^T\) is the activation vector with \(|\varepsilon_j(Z)|\le\bar{\varepsilon}_j\) for some constant \(\bar{\varepsilon}_j>0\). The Gaussian function is chosen as the activation function, i.e., \(h_t(Z)=\exp\Bigl(-\frac{\|Z-\mu_t\|^2}{\eta^2}\Bigr)\) for \(t=1,\ldots,k\), where \(\mu_t\in\mathbb{R}^p\) denotes the center of the \(t\)th neuron and \(\eta>0\) is the width parameter.

\begin{lemma} \cite{rahimi2018neural}
Let 
\( Z = \{ n \in \mathbb{R}^n : |n_i| < 1,\; i=1,\ldots,n \} \)
and 
\( \mathcal{N} = \mathbb{R}^l \times Z \subset \mathbb{R}^{l+n} \). 
Consider the system 
\(\dot{g} = h(t,g)\) with \(g = [x; n]^T \in \mathcal{N}\), where 
\(h : \mathbb{R}^+ \times \mathcal{N} \to \mathbb{R}^{l+n}\)
is piecewise continuous in \(t\) and locally Lipschitz in \(g\) uniformly in \(t\). For each \(i=1,\ldots,n\), assume there exists a continuously differentiable, positive definite function 
\(V_i : \{ n \in \mathbb{R} : |n| < 1 \} \to \mathbb{R}^+\)
and a continuously differentiable, positive definite function 
\(U : \mathbb{R}^l \to \mathbb{R}^+\) 
satisfying 
\( c_1(|x|) \le U(x) \le c_2(|x|) \)
for some class-\(\mathcal{K}_\infty\) functions \(c_1\) and \(c_2\). Define the composite Lyapunov function 
\( V(g) = U(x) + \sum_{i=1}^n V_i(n_i) \)
and assume that \( n(0) \in Z \). If there exist constants \( v_1, v_2 > 0 \) such that 
\[\dot{V}(g) \le -v_1 V(g) + v_2\]
for all \( g \in \mathcal{N} \), then the state \(x\) remains bounded and \( n(t) \in Z \) for all \( t \ge 0 \). 

\end{lemma}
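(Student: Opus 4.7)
The plan is to combine the comparison lemma applied to $\dot V \le -v_1 V + v_2$ with the barrier-type growth of each $V_i$ near $|n_i|=1$. First, I would invoke the local Lipschitz hypothesis on $h$ to obtain a unique maximal solution on some interval $[0,t_{\max})$ with $g(0)\in\mathcal{N}$, and apply the standard comparison lemma to the scalar bound $\dot V \le -v_1 V + v_2$ evaluated along the trajectory, obtaining
\[
V(g(t)) \le \left(V(g(0))-\frac{v_2}{v_1}\right)e^{-v_1 t}+\frac{v_2}{v_1}\le \max\!\left\{V(g(0)),\,\frac{v_2}{v_1}\right\}=:B
\]
for all $t\in[0,t_{\max})$. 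Since $V(g)=U(x)+\sum_i V_i(n_i)$ is a sum of nonnegative terms, the componentwise bounds $U(x(t))\le B$ and $V_i(n_i(t))\le B$ follow immediately.

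Second, boundedness of $x(t)$ will come from $c_1(|x(t)|)\le U(x(t))\le B$; since $c_1\in\mathcal{K}_\infty$ is strictly increasing and unbounded it has a well-defined inverse, giving $|x(t)|\le c_1^{-1}(B)$ on $[0,t_{\max})$. Third, to show $n(t)\in Z$, I would argue by contradiction using the barrier role of each $V_i$: the intended construction of $V_i$ on $(-1,1)$ forces $V_i(n_i)\to\infty$ as $|n_i|\to 1$ (the standard BLF property implicit here, e.g.\ $V_i(n_i)=\tfrac12\log\bigl(1/(1-n_i^2)\bigr)$). If some $|n_i(t)|$ approached $1$ as $t\to t^\star\le t_{\max}$, then $V_i(n_i(t))\to\infty$, contradicting the uniform bound $V_i(n_i(t))\le B$. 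Hence $n(t)$ stays in a compact subset of $Z$; combined with the bound on $|x(t)|$, the full trajectory remains in a compact subset of $\mathcal{N}$, and the standard ODE extension theorem then yields $t_{\max}=+\infty$.

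The main obstacle is a modelling subtlety rather than a technical one: as stated, $V_i$ is only required to be continuously differentiable and positive definite on $\{|n|<1\}$, whereas the barrier conclusion genuinely needs $V_i$ to diverge at the boundary. This is standard in the BLF literature and is automatic once $V_i$ is chosen as a logarithmic (or similar) barrier; with that observation in place, the remaining steps—comparison lemma, nonnegativity of the Lyapunov components, invertibility of $c_1$, and maximal-interval extension—are routine.
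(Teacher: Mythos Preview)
Your argument is the standard one and is essentially correct: comparison lemma on $\dot V\le -v_1 V+v_2$ gives a uniform bound $B$ on $V$ along the maximal solution, the $\mathcal{K}_\infty$ lower bound $c_1$ on $U$ yields $|x(t)|\le c_1^{-1}(B)$, the barrier blow-up of each $V_i$ at $|n_i|\to 1$ keeps $n(t)$ in a compact subset of $Z$, and the ODE extension theorem then forces $t_{\max}=+\infty$. You also correctly flag the only real subtlety, namely that the statement as written only asks $V_i$ to be positive definite and $C^1$ on $(-1,1)$, whereas the conclusion needs $V_i(n_i)\to\infty$ as $|n_i|\to 1$; this is the implicit BLF hypothesis in the cited source and is exactly how the result is used later in the paper (via the logarithmic barrier $V_1$).

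As for comparison with the paper: the paper does not supply a proof of this lemma at all---it is stated as a cited result from \cite{rahimi2018neural} and used as a black box in the stability analysis. So there is no alternative argument to contrast with yours; your proposal simply fills in what the paper omits, and does so along the lines of the original reference.
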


{\textbf{Smooth Zone Barrier Lyapunov Function} \cite{nohooji2024smooth}. A function \(V:\Omega\to\mathbb{R}^+\), defined on an open set \(\Omega\subset\mathbb{R}^n\) whose boundary represents the state constraints, is called a smooth zone barrier Lyapunov function (s-ZBLF) if it satisfies: (i) \(V(0)=0\) and \(V(x)>0\) for all \(x\in\Omega\setminus\{0\}\); (ii) \(V(x)\to\infty\) as \(x\) approaches the constraint boundary; and (iii) \(V(x)\) increases gradually for states well within \(\Omega\) and more steeply as \(x\) nears the boundary, thereby enabling a progressive increase in control action near constraints while maintaining minimal control effort in unconstrained regions. \\
In this paper, we adopt the s-ZBLF 

\begin{equation}
\label{v1}
V_1=\frac{1}{2\beta}\ln\Biggl(\frac{k_c^2(t)}{k_c^2(t)-Z_1^\gamma(t)^{ 2}}\Biggr),
\end{equation}
where \(k_c(t)\) denotes the time-varying constraint boundary, \(Z_1^\gamma(t)\) represents the transformed tracking error to be defined later, and \(\beta>0\) is a design parameter that governs the rate at which the Lyapunov function grows.

\begin{lemma}
For any positive constant \( k \), the following inequality holds for all \( |\Xi| < k \):
\[
\frac{1}{2\beta} \ln \left( \frac{k^2}{k^2 - \Xi^2} \right) < \frac{\Xi^2}{\beta (k^2 - \Xi^2)}.
\]
\end{lemma}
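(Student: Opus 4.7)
The plan is to reduce the inequality to a one‐variable calculus statement and verify it by a ``value at zero plus strict monotonicity'' argument. Both sides are symmetric in $\Xi$ and vanish at $\Xi=0$, so strict inequality can only hold on $0<|\Xi|<k$; I would state this caveat at the outset. Introducing the dimensionless variable $u=\Xi^{2}/k^{2}\in(0,1)$, the factors of $k^{2}$ cancel and, after multiplying through by $2\beta>0$, the claim collapses to
\[
-\ln(1-u)\;<\;\frac{2u}{1-u}.
\]

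To establish this, I would define $f(u)=\dfrac{2u}{1-u}+\ln(1-u)$ on $[0,1)$, observe that $f(0)=0$, and compute
\[
f'(u)=\frac{2}{(1-u)^{2}}-\frac{1}{1-u}=\frac{1+u}{(1-u)^{2}},
\]
which is strictly positive on $(0,1)$. Hence $f(u)>0$ on $(0,1)$, which is exactly the reduced inequality, and undoing the substitution recovers the original bound. As an even shorter alternative, I would note the classical estimate $\ln(y)\le y-1$ for $y>0$: taking $y=1/(1-u)$ yields $-\ln(1-u)\le u/(1-u)$, which is already only half of $2u/(1-u)$, showing that the stated bound actually holds with considerable slack.

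There is no substantive obstacle; the lemma is an elementary scalar inequality, and the only point that demands care is the degenerate case $\Xi=0$, where both sides equal zero. I would simply remark that in the Lyapunov analysis this bound is invoked only when the tracking‐error term is nonzero, so restricting to $\Xi\neq 0$ is harmless in the applications to $V_{1}$.
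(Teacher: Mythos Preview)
Your main argument is correct and follows essentially the same idea as the paper's proof: form the difference of the two sides, check that it vanishes at the origin, and show by differentiation that it is strictly increasing away from zero. The paper works directly in $\Xi$, multiplying through by $k^{2}-\Xi^{2}$ to obtain an auxiliary function $H_{2}(\Xi)=2\Xi^{2}-(k^{2}-\Xi^{2})\ln\!\bigl(\tfrac{k^{2}}{k^{2}-\Xi^{2}}\bigr)$ whose derivative $2\Xi\bigl[1+\ln(\tfrac{k^{2}}{k^{2}-\Xi^{2}})\bigr]$ is easy to sign; your substitution $u=\Xi^{2}/k^{2}$ achieves the same simplification more cleanly and avoids the extra multiplication step. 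Both arguments correctly note that equality holds at $\Xi=0$, so the strict inequality is valid only for $0<|\Xi|<k$, exactly as you flag. Your alternative via the classical bound $\ln y\le y-1$ with $y=1/(1-u)$ is genuinely shorter than either calculus argument and even yields the sharper estimate $-\ln(1-u)\le u/(1-u)$, which the paper does not exploit; this is a worthwhile observation, since it shows the lemma holds with a factor of two to spare.
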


\begin{proof}
Define \( H_1(\Xi) \triangleq \frac{2\Xi^2}{k^2-\Xi^2} - \ln\Bigl(\frac{k^2}{k^2-\Xi^2}\Bigr) \) and introduce \( H_2(\Xi) \triangleq (k^2-\Xi^2)H_1(\Xi) = 2\Xi^2 - (k^2-\Xi^2)\ln\Bigl(\frac{k^2}{k^2-\Xi^2}\Bigr) \). Note that \( H_2(0)=0 \). Differentiating \( H_2(\Xi) \) with respect to \( \Xi \) yields \( H_2'(\Xi)=2\Xi\Bigl[1+\ln\Bigl(\frac{k^2}{k^2-\Xi^2}\Bigr)\Bigr] \). Since \( \ln\Bigl(\frac{k^2}{k^2-\Xi^2}\Bigr)>0 \) for all \( 0<|\Xi|<k \), it follows that \( H_2'(\Xi)<0 \) for \( \Xi<0 \) and \( H_2'(\Xi)>0 \) for \( \Xi>0 \). Consequently, \( H_2(\Xi) \) attains its minimum at \( \Xi=0 \) with \( H_2(\Xi)\ge0 \) for all \( |\Xi|<k \). Since \( k^2-\Xi^2>0 \) for \( |\Xi|<k \), we have \( H_1(\Xi)=\frac{H_2(\Xi)}{k^2-\Xi^2}\ge0 \) (with equality only at \( \Xi=0 \)). That is, \( \frac{2\Xi^2}{k^2-\Xi^2} > \ln\Bigl(\frac{k^2}{k^2-\Xi^2}\Bigr) \) for \( 0<|\Xi|<k \). Dividing both sides by \( 2\beta>0 \) completes the proof by \( \frac{1}{2\beta}\ln\Bigl(\frac{k^2}{k^2-\Xi^2}\Bigr) < \frac{\Xi^2}{\beta(k^2-\Xi^2)} \).
\end{proof}

\begin{remark}
In this paper, the Lyapunov function is designed to enforce constraints smoothly while minimizing unnecessary control effort. It maintains a low growth rate for small tracking errors, reducing control influence in unconstrained regions, and gradually increases as the error nears the time-varying boundary to ensure constraint satisfaction. Additionally, integrating a prescribed-time shifting function enables a smooth transition to constraint enforcement, even when the initial state is outside the prescribed constraint set.
\end{remark}

Note that for simplifying notation, from this point onwards, the arguments of functions and variables are omitted when their meaning is clear from the context.

\section{Control Design}

Define the tracking errors as \(Z_1 = q - q_d\) and \(Z_2 = \dot{q} - \alpha\), where \(q_d\) denotes the desired trajectory and \(\alpha\) is a stabilizing function. For an \(n\)-DOF system, we express these errors component-wise as \(Z_1 = [Z_{11}, Z_{12}, \dots, Z_{1n}]^T\) and \(Z_2 = [Z_{21}, Z_{22}, \dots, Z_{2n}]^T\); throughout, we use the vector notation \(Z_1\) and \(Z_2\) for brevity while referring to individual components when necessary.

To ensure that the tracking error remains within a prescribed bound after a predefined time \(T\), we first define an effective time-varying constraint boundary on \(Z_1\) by
\[
k_c(t)=\min_{i=1,\ldots,n}\Bigl\{ \bar{k}_i(t)-q_{d,i}(t),\; q_{d,i}(t)-\underline{k}_i(t) \Bigr\},
\]
which represents the minimum distance between the desired trajectory and the constraint limits. To accommodate possible initial constraint violations, we employ the shifting function \(\gamma(t)\) (see its definition in Section II) and define the transformed tracking error as
\begin{equation}
\label{z1gamma}
Z_1^\gamma = \gamma(t) Z_1.
\end{equation}
Considering Lemma1 and noting \(\gamma(0)=0\) and \(\gamma(t)=1\) for all \(t\ge T\), \(Z_1^\gamma\) transitions smoothly from zero at \(t=0\) to \(Z_1(t)\) for \(t\ge T\). Thus, for any bounded \(Z_1(0)\) we have \(Z_1^\gamma(0)=0\), and ensuring the boundedness of the Lyapunov function \(V_1\) guarantees that \(Z_1^\gamma\) remains within the constraint set \(\Omega_{Z_1^\gamma}\) for \(t>0\).

Taking the time derivative of \(Z_1^\gamma\)  yields
\[
\dot{Z}_1^\gamma = \gamma(t)\dot{Z}_1 + \dot{\gamma}(t)Z_1.
\]
Substituting \(\dot{Z}_1 = \dot{q} - \dot{q}_d\) and employing the robot dynamics \eqref{dyn}, 
we obtain the error dynamics:
\begin{equation}
\begin{aligned}
\dot{Z}_1^\gamma &= \gamma(t) \Bigl(Z_2 + \alpha - \dot{q}_d\Bigr) + \dot{\gamma}(t) Z_1,\\[1mm]
\dot{Z}_2 &= M^{-1}(q)\Bigl[\tau - C(q,\dot{q})\dot{q} - G(q) + d\Bigr] - \dot{\alpha}.
\end{aligned}
\label{eq:error-dynamics}
\end{equation}

\subsection{Critic Network}
The critic network is employed to approximate the cost-to-go function, which represents the long-term evaluation of the system's performance. It is defined as 
$J = W_c^{*T} S_c(Z_c) + \epsilon_c,$
where \( Z_c = Z \) is the critic input, \( S_c(Z_c) \) is a set of basis functions, \( W_c^* \) is the optimal weight vector, and \( \epsilon_c \) is the approximation error.
The estimated cost-to-go function is given by $\hat{J} = \hat{W}_c^T S_c(Z_c).$

Using the Temporal-Difference (TD) error, the Bellman equation defines the estimation error as:
\begin{equation}
\label{delta}
\delta = r - \frac{1}{\psi} \hat{J} + \dot{\hat{J}},
\end{equation}
where the instantaneous cost function is:
\begin{equation}
r(t) = Z^T Q Z + \tau^T R \tau.
\end{equation}
Here, \( Q \) and \( R \) are positive semi-definite weighting matrices.
To minimize \( \delta \), the objective function for the critic network is defined as $E_c = \frac{1}{2} \delta^T \delta.$
Using the gradient descent method, the adaptation law for the critic network weights is derived as:
\begin{equation} 
\dot{\hat{W}}_c = -\sigma_c \frac{\partial E_c}{\partial \hat{W}_c} = -\sigma_c \delta \frac{\partial \delta}{\partial \hat{W}_c}.
\end{equation}
where \( \sigma_c > 0 \) is the learning rate.
Considering $
\dot{\hat{J}} = \hat{W}_c^T \nabla S_c \dot{Z}_c,$ where $\nabla$ is the gradient along $S_c$,
and substituting it into \eqref{delta} one can obtain \( \delta \) as:
\begin{equation}
\delta = r - \hat{W}_c^T \left(\frac{1}{\psi} S_c - \nabla S_c \dot{Z}_c \right).
\end{equation}
Then, defining $\Lambda = -\frac{1}{\psi} S_c + \nabla S_c \dot{Z}_c,$
the adaptation law for the critic weights simplifies to:
\begin{equation}
\dot{\hat{W}}_c = -\sigma_c \left( r + \hat{W}_c^T \Lambda \right) \Lambda.
\end{equation}
Finally, to ensure boundedness and robustness, a damping term is added:
\begin{equation}
\label{wdotc}
\dot{\hat{W}}_c = -\sigma_c \left( r + \hat{W}_c^T \Lambda \right) \Lambda - \sigma_c \eta_c \hat{W}_c,
\end{equation}
where \( \eta_c > 0 \) is a small regularization parameter.

\subsection{Actor Network}
The actor network is responsible for learning the optimal control policy without requiring explicit system dynamics. Instead of relying on predefined system equations, the actor network approximates the control input adaptively.

The control law is defined as:
\begin{equation}
\label{tau}
    \tau = \hat{W}_a^T S_a(Z_a) - K_2 Z_2 
    - \frac{\gamma Z_1^\gamma}{\beta (k_c^2 - Z_1^{\gamma T} Z_1^\gamma)}.
\end{equation}
where  \( \hat{W}_a \) is the estimated actor weight vector,
 \( S_a(Z_a) \) is a basis function set,
 \( Z_a = (q^T, \dot{q}^T, Z_1^T, Z_2^T) \) is the input to the actor network.

The weight estimation error is defined as $\tilde{W}_a =  \hat{W}_a -W_a^*.$ To improve estimation accuracy, define the instant estimation error $\xi_a = \tilde{W}_a^T S_a(Z_a).$
To minimize this error, we introduce the actor integrated error:
\begin{equation}
\varsigma_a = \xi_a + \frac{Z_2}{\sigma_a} + k_a \hat{J},
\end{equation}
where \( k_a > 0 \) is a control gain. The objective is to minimize \( \varsigma_a \) while ensuring adaptive tracking performance. Define the error function:
\begin{equation}
E_a = \frac{1}{2} \varsigma_a^T \varsigma_a.
\end{equation}
Using the gradient descent method, the actor weight update law is:
\begin{equation}
\dot{\hat{W}}_a = -\sigma_a \frac{\partial E_a}{\partial \hat{W}_a} = -\sigma_a \varsigma_a \frac{\partial \varsigma_a}{\partial \xi_a} \frac{\partial \xi_a}{\partial \hat{W}_a}.
\end{equation}

To ensure boundedness and robustness, a regularization term is added:
\begin{equation}
\label{wdota}
\dot{\hat{W}}_a = -\sigma_a \left( \hat{W}_a^T S_a(Z_a) + \frac{Z_2}{\sigma_a} + k_a \hat{J} \right) S_a(Z_a) - \sigma_a \eta_a \hat{W}_a,
\end{equation}
where \( \eta_a > 0 \) prevents parameter drift.

\subsection{Stability Analysis}

To analyze the stability of the closed-loop system, we examine the Lyapunov functions associated with the robot dynamics, critic network, and actor network. The stability analysis considers the interaction between these components to ensure boundedness of all closed-loop signals and constraint satisfaction.

Taking the time derivative of the Lyapunov function \(V_1\) defined in \eqref{v1} yields
\begin{equation}
\label{firstvdot}
\dot{V}_1 = \frac{1}{\beta\Bigl(k_c^2 - Z_1^{\gamma T} Z_1^\gamma\Bigr)}\,Z_1^{\gamma T}\left(\dot{Z}_1^\gamma - \frac{\dot{k}_c}{k_c} Z_1^\gamma\right).
\end{equation}
Substituting the expression for \(\dot{Z}_1^\gamma\) from \eqref{eq:error-dynamics} and simplifying, we obtain
\begin{equation}
\dot{V}_1=\frac{Z_1^{\gamma T}}{\beta\Bigl(k_c^2-Z_1^{\gamma T}Z_1^\gamma\Bigr)}\Bigl(\dot{\gamma}Z_1+\gamma\bigl(Z_2+\alpha-\dot{q}_d\bigr)-\frac{\dot{k}_c}{k_c}Z_1^\gamma\Bigr).
\end{equation}
Using Young’s inequality, we have
\begin{equation}
\frac{Z_1^{\gamma T}\dot{\gamma}Z_1}{\beta\Bigl(k_c^2 - Z_1^{\gamma T} Z_1^\gamma\Bigr)}
\le \frac{a\,Z_1^{\gamma T} Z_1^\gamma \dot{\gamma}^2 \|Z_1\|^2}{\beta\Bigl(k_c^2 - Z_1^{\gamma T} Z_1^\gamma\Bigr)^2} + \frac{1}{4a},
\end{equation}
where \(a>0\) is a design parameter. Thus, we obtain
\begin{align}
\label{vd}
\dot{V}_1 &\le \frac{Z_1^{\gamma T}\gamma}{\beta\Bigl(k_c^2 - Z_1^{\gamma T} Z_1^\gamma\Bigr)} \Biggl( \frac{a\,\dot{\gamma}^2\|Z_1\|^2}{\beta\Bigl(k_c^2 - Z_1^{\gamma T} Z_1^\gamma\Bigr)} + Z_2 + \alpha - \dot{q}_d \nonumber\\[1mm]
&\quad\quad\quad\quad - \frac{\dot{k}_c}{k_c} Z_1 \Biggr)
+ \frac{1}{4a}.
\end{align}
Choose the virtual control law as
\begin{align}
\label{alpha}
\alpha = -K_1 Z_1 - \frac{a\,\dot{\gamma}^2Z_1\|Z_1\|^2}{\beta\Bigl(k_c^2 - Z_1^{\gamma T} Z_1^\gamma\Bigr)} + \dot{q}_d + \frac{\dot{k}_c}{k_c} Z_1,
\end{align}
where \(K_1>1\) is a design parameter. Substituting \eqref{alpha} into \eqref{vd} yields
\begin{equation}
\label{vdot1}    
\dot{V}_1 \le -\frac{K_1\,Z_1^{\gamma T}Z_1^\gamma}{\beta\Bigl(k_c^2 - Z_1^{\gamma T}Z_1^\gamma\Bigr)} + \frac{\gamma\,Z_1^{\gamma T}Z_2}{\beta\Bigl(k_c^2 - Z_1^{\gamma T}Z_1^\gamma\Bigr)} + \frac{1}{4a}.
\end{equation}

Define the Lyapunov function candidate for the robot dynamics as:
\begin{equation}
    V_r = V_1 + \frac{1}{2} Z_2^T M Z_2,
\end{equation}
where \( V_1 \) is the smooth zone barrier function ensuring constraint satisfaction, and the second term represents the system's kinetic energy.
Taking the time derivative:
\begin{equation}
    \dot{V}_r = \dot{V}_1 + Z_2^T M \dot{Z}_2 + \frac{1}{2} Z_2^T \dot{M} Z_2.
\end{equation}
Using the skew-symmetry property of the manipulator dynamics in Property1,
the expression simplifies to:
\begin{equation}
\label{vdotr1}
    \dot{V}_r = \dot{V}_1 + Z_2^T (\tau - f),
\end{equation}
where the term \( f \) represents the unknown system dynamics and is defined as
\(
    f = C \alpha + G + M \dot{\alpha} - d,
\)
and since $\alpha$ is a function of $q$, $q_d$, $\dot{q_d}$, $k_c$, $\dot{k_c}$, $\gamma$, and $\dot{\gamma}$, then
\( \dot{\alpha} \) is  derived as: 
\begin{equation}
\begin{aligned}
\dot{\alpha} &= \frac{\partial \alpha}{\partial q} \dot{q} 
    + \sum_{j=0}^{1} \frac{\partial \alpha}{\partial q_d^{(j)}} q_d^{(j+1)}\\[1mm]
&\quad + \sum_{j=0}^{1} \frac{\partial \alpha}{\partial k_c^{(j)}} k_c^{(j+1)}
    + \sum_{j=0}^{1} \frac{\partial \alpha}{\partial \gamma^{(j)}} \gamma^{(j+1)}.
\end{aligned}
\end{equation}
To handle uncertainties in \( f \), we utilize radial basis function neural networks to approximate as:
\begin{equation}
\label{f}
    f = W_a^{*T} S_a(Z_a) + \epsilon_a,
\end{equation}
where \( W_a^* \) is the optimal weight vector and \( \epsilon_a \) is the approximation error.
Substituting \eqref{f} into \eqref{vdotr1}, one have
\begin{equation}
\label{vdotr2}
    \dot{V}_r = \dot{V}_1 + Z_2^T \left( \tau - W_a^{*T} S_a(Z_a) - \epsilon_a \right).
\end{equation}
Then, substituting control input \eqref{tau} into \eqref{vdotr2} leads to
\begin{align}
\label{vdotr3}
    \dot{V}_r &= \dot{V}_1 + Z_2^T \Big( \hat{W}_a^T S_a(Z_a) - K_2 Z_2 - \frac{\gamma Z_1^\gamma}{\beta (k_c^2 - Z_1^{\gamma T} Z_1^\gamma)}
 \nonumber \\
    &\quad - W_a^{*T} S_a(Z_a) - \epsilon_a \Big).
\end{align}
Rewriting \eqref{vdotr3} using the actor weight estimation error \( \tilde{W}_a = \hat{W}_a - W_a^* \) and defining \( \epsilon_{\bar{a}} = -\epsilon_a \), we obtain:
\begin{align}
\label{vdot4}
    \dot{V}_r = \dot{V}_1 + Z_2^T \Big( \tilde{W}_a^T S_a(Z_a) - K_2 Z_2 - \frac{\gamma Z_1^\gamma}{\beta (k_c^2 - Z_1^{\gamma T} Z_1^\gamma)}
  + \epsilon_{\bar{a}} \Big).
\end{align}
Applying Young’s inequality to the approximation error term, we have
$Z_2^T \epsilon_{\bar{a}} \leq \frac{1}{2} \left( Z_2^T Z_2 + \epsilon_{\bar{a}}^T \epsilon_{\bar{a}} \right),$ and substituting \eqref{vdot1} into \eqref{vdot4}, we obtain:
\begin{align}
\label{vdotr}
    \dot{V}_r &\leq -\frac{K_1 Z_1^{\gamma T} Z_1^\gamma}{\beta (k_c^2 - Z_1^{\gamma T} Z_1^\gamma)} 
    - Z_2^T \left( K_2 - \frac{I}{2} \right) Z_2 \nonumber \\
    &\quad  + Z_2^T \tilde{W}_a^T S_a(Z_a)  +  \frac{1}{2} \epsilon_{\bar{a}}^T \epsilon_{\bar{a}}.
\end{align}
where $I$ is an identity matrix.

To stability analysis of the critic network, define the critic network Lyapunov function as:
\begin{equation}
    V_c = \frac{1}{2\sigma_c} \tilde{W}_c^T  \tilde{W}_c,
\end{equation}
where \( \tilde{W}_c =  \hat{W}_c -W_c^* \) is the critic weight estimation error. Taking the time derivative:
\begin{equation}
    \dot{V}_c = \frac{1}{\sigma_c} \tilde{W}_c^T \dot{\tilde{W}}_c.
\end{equation}
Substituting the critic weight adaptation law \eqref{wdotc}, we obtain 

\begin{equation}
\label{vdo}
    \dot{V}_c = -\tilde{W}_c^T \delta \Lambda - \tilde{W}_c^T \eta_c \hat{W}_c.
\end{equation}
Finally, by applying algebraic manipulations and employing Young’s inequality to appropriately bound the cross-terms in \eqref{vdo}, we obtain the inequality for the derivation of the critic network Lyapunov function $V_c$ as
\begin{equation}
\label{vdotc}
    \dot{V}_c \leq - \frac{\eta_c}{2} \tilde{W}_c^T \tilde{W}_c + \frac{\eta_c}{2} {\left\| W_c^*\right\|}^2 + \frac{1}{4} \bar{e_c}^2.
\end{equation}
where $\bar{e_c}$ is the upper bound of $e_c.$


To stability analysis of the actor network, define the Lyapunov function for the actor network as:
\begin{equation}
    V_a = \frac{1}{2} \tilde{W}_a^T \tilde{W}_a.
\end{equation}
Taking its time derivative:
\begin{equation}
    \dot{V}_a = \tilde{W}_a^T \dot{\hat{W}}_a.
\end{equation}
Substituting the adaptation law \eqref{wdota}, we obtain:
\begin{align}
    \dot{V}_a &= - \tilde{W}_a^T \sigma_a \left( \hat{W}_a^T S_a(Z_a) + \frac{Z_2}{\sigma_a} + k_a \hat{J} \right) S_a(Z_a) \nonumber \\
    &\quad - \tilde{W}_a^T \sigma_a \eta_a \hat{W}_a.\\
    &= - \sigma_a \tilde{W}_a^T S_a(Z_a) \hat{W}_a^T S_a(Z_a) - \tilde{W}_a^T S_a(Z_a) Z_2 \nonumber \\
    &\quad - k_a \sigma_a \tilde{W}_a^T S_a(Z_a) \hat{J} - \sigma_a \eta_a \tilde{W}_a^T  \hat{W}_a.
\end{align}
Finally, by extending the procedure outlined in \cite{nohooji2024actor} with further algebraic manipulations and the application of Young’s inequality to effectively bound the cross-terms, we arrive at the following inequality for the time derivative of the actor network Lyapunov function $V_a$ as
\begin{align}
\label{vdota}
    \dot{V}_a &\leq - \sigma_a \tilde{W}_a^T S_a(Z_a) Z_2 - \frac{\sigma_a \eta_a}{2} \tilde{W}_a^T \tilde{W}_a \nonumber \\
    &\quad + \frac{\sigma_a }{2} {\left\|S_a \right\|}^2 {\left\| W_a^*\right\|}^2 +  \frac{\sigma_a\eta_a }{2}  {\left\| W_a^*\right\|}^2+  \sigma_a k_a^2{\left\| W_c^*\right\|}^2{\left\|S_c \right\|}^2 \nonumber \\
    &\quad + \sigma_a k_a^2{\left\|S_c \right\|}^2\tilde W_c^T{{\tilde W}_c} .
\end{align}

From the derived stability conditions for \( V_r \), \( V_c \), and \( V_a \), we obtain the total Lyapunov function:
\begin{equation}
\label{Lyap}
    V = V_r + V_c + V_a.
\end{equation}
Taking its time derivative
\(
    \dot{V} = \dot{V}_r + \dot{V}_c + \dot{V}_a,
\)
and substituting the derived inequalities for \( \dot{V}_r \) in \eqref{vdotr}, \( \dot{V}_c \) in \eqref{vdotc}, and \( \dot{V}_a \) in \eqref{vdota}, we obtain:


\begin{align}
    \dot{V} &\leq -\frac{K_1 Z_1^{\gamma T} Z_1^\gamma}{\beta (k_c^2 - Z_1^{\gamma T} Z_1^\gamma)} - Z_2^T (K_2 - I/2) Z_2 \nonumber \\
    &\quad - \frac{1}{2} \left( \eta_c - 2 \sigma_a k_a^2 {\left\|S_c \right\|}^2 \right) \tilde{W}_c^T \tilde{W}_c - \frac{\sigma_a \eta_a}{2} \tilde{W}_a^T \tilde{W}_a \nonumber \\
    &\quad  + \frac{1}{2} \left( \eta_c + 2 \sigma_a k_a^2 {\left\|S_c \right\|}^2 \right) {\left\| W_c^*\right\|}^2 \nonumber \\
    &\quad + \frac{\sigma_a}{2} \left( \eta_a + {\left\|S_a \right\|}^2  \right) {\left\| W_a^*\right\|}^2 + \frac{1}{4} \bar{e_c}^2.
\end{align}
Then, we express \( \dot{V} \) in the form:

\begin{equation} 
\label{eq_vdotf}
	\dot V \le  - {\iota _1}V + {\iota _2},
\end{equation}

where the parameters \( \iota_1 \) and \( \iota_2 \) are defined as:

\[
{\iota _1} = \min \left\{ K_1, \frac { 2\lambda_{\min} (K_2 - I/2)}{\lambda_{\max}(M)}, \frac{ \eta_c - 2 \sigma_a k_a^2 \underline{S}_c }{\lambda_{\max}(\sigma_c)}, \sigma_a \eta_a \right\},
\]

\[
{\iota _2} =  \frac{1}{2} \left( \eta_c + 2 \sigma_a k_a^2 \bar{S}_c \right) \bar{w}_c 
+ \frac{\sigma_a}{2} \left( \eta_a + \bar{S}_a \right) \bar{w}_a 
+ \frac{1}{4} \bar{e_c}^2.
\]
where \( \underline{S}_c \) is the lower bound of \( {\left\|S_c \right\|}^2 \), and \( \bar{S}_c \), \( \bar{S}_a \), \( \bar{w}_c \), and \( \bar{w}_a \) are the upper bounds of \( {\left\|S_c \right\|}^2 \), \( {\left\|S_a \right\|}^2 \), \( {\left\|W_c^* \right\|}^2 \), and \( {\left\|W_a^* \right\|}^2 \), respectively.
According to Lemma 2, with proper selection of the control parameters ensures \( \iota_1 > 0 \), guaranteeing semi-global uniform ultimate boundedness of the closed-loop system. Specifically, choosing \( K_2 \) such that \( K_2 - I/2 > 0 \), ensuring \( \eta_c > 2 \sigma_a k_a^2 \underline{S}_c \), and selecting appropriate values for \( \sigma_a \) and \( \eta_a \) to maintain positivity.\\

\textbf{Theorem 1.}
\textit{Consider the closed-loop system defined by \eqref{dyn} with Properties 1, the transformed error dynamics \eqref{eq:error-dynamics}, and the adaptation laws \eqref{wdotc} and \eqref{wdota}. Consider Lemmas 1-3, and let the virtual control given by   \eqref{alpha} and the control input by \eqref{tau}. Then, all closed-loop signals are semi-globally uniformly ultimately bounded, and the tracking error \(Z_1(t)\) converges to a small neighborhood of the origin while the state constraints are satisfied for all \(t\ge T_c\).}

\begin{proof}
Considering the Lyapunov function in \eqref{Lyap} and following the derivation from \eqref{firstvdot} through to \eqref{eq_vdotf}, we obtain
\(
\dot{V} \le -\iota_1 V + \iota_2,
\)
where \(\iota_1 > 0\) and \(\iota_2 > 0\) are defined in the subsequent equations. Utilizing Lemma~2 and Lemma~3, it follows that \(V(t)\) is uniformly ultimately bounded. Given the boundedness of the mass matrix in Property~1, this ensures the boundedness of \(Z_2\), \(\tilde{W}_c\), and \(\tilde{W}_a\), while maintaining \(Z_1^\gamma(t)\) within the constraint set for all \(t > 0\).
From \eqref{z1gamma} and Lemma~1, for \(t \geq T_c\), where \(\gamma(t) = 1\), the error constraint \( |Z_1(t)| < k_c(t) \) is satisfied. Since \(Z_1^\gamma(0) = 0\), the constraint holds for all \(t \geq 0\).\\
Since the desired trajectory satisfies \( |q_d(t)| \leq k_d(t) \), where \( k_d(t) \) is a known positive function, we obtain the bound
\(
|q(t)| = |q_d(t) + Z_1(t)| \leq k_d(t) + k_c(t) = k_q(t),
\)
ensuring \( q(t) \) remains within \( k_q(t) \) for all \( t \geq T_c \).
Additionally, as \( {\tilde{W}_a} = {\hat{W}_a} - W_a^* \) and \( {\tilde{W}_c} = {\hat{W}_c} - W_c^* \), the boundedness of \( \hat{W}_a \) and \( \hat{W}_c \) follows. Finally, considering the boundedness of basis functions \(S_c\), \(S_a\), and noting \( |\dot{k_d}|\leq \bar{k}_{d,i} \), the boundedness of virtual control \( \alpha \) in \eqref{alpha} and control input \( \tau(t) \) in \eqref{tau} is ensured, leading to the boundedness of all closed-loop signals.
\end{proof}

\section{Numerical Simulations}
To illustrate the effectiveness of the developed control scheme, simulation studies were performed on a two-link robot manipulator operating in the vertical plane. In the simulations, the desired joint trajectories are \(q_{d1}(t)=\sin(2t)\) and \(q_{d2}(t)=\cos(t)\). We impose time-varying error constraints on the tracking error \(Z_1 = q - q_d\) via \(\lvert Z_{1i}(t)\rvert < k_{c,i}(t)\), where \(k_{c1}(t) = 0.5 + 0.1\,\sin(0.5\,t)\) and \(k_{c2}(t) = 0.45 + 0.1\,\cos(0.5\,t)\). Equivalently, for each joint \(i\), the actual position \(q_i(t)\) must remain within 
\(q_{d,i}(t) - k_{c,i}(t) < q_i(t) < q_{d,i}(t) + k_{c,i}(t)\). 
A prescribed-time shifting function with \(T_c = 2\,\mathrm{s}\) is applied to handle possible initial constraint violations smoothly.

The control parameters are set to \(K_1 = 15\), \(K_2 = 15\), and \(\beta = 10\). considering \eqref{v1} the choice of \(\beta = 10\) ensures that the sZBLF remains close to its horizon when the tracking error is small, thereby minimizing the control effort in unconstrained regions. Actor--critic learning rates are \(\sigma_a = \sigma_c = 50\), with small regularization parameters \(\eta_a = 0.01\) and \(\eta_c = 0.5\). Both the actor and critic networks utilize an RBF structure with 10 hidden neurons, centers uniformly distributed in \([-5,5]\), and a Gaussian width of \(\eta_{\mathrm{NN}} = 1\).\\
 We choose \(q(0) = [0.60,\,1.80]^T\) and \(\dot{q}(0) = [0,\,0]^T\), which yields initial errors \(Z_{11}(0)=0.60\) and \(Z_{12}(0)=0.80\). Both values exceed the respective bounds of 0.5 and 0.55. Consequently, the deferred constraint activation via the shifting function ensures that \(Z_1^\gamma(t)\) converges within the bounds for \(t \ge 2\,\mathrm{s}\).

\begin{figure}[!ht]
  \centering
  \includegraphics[width=\columnwidth]{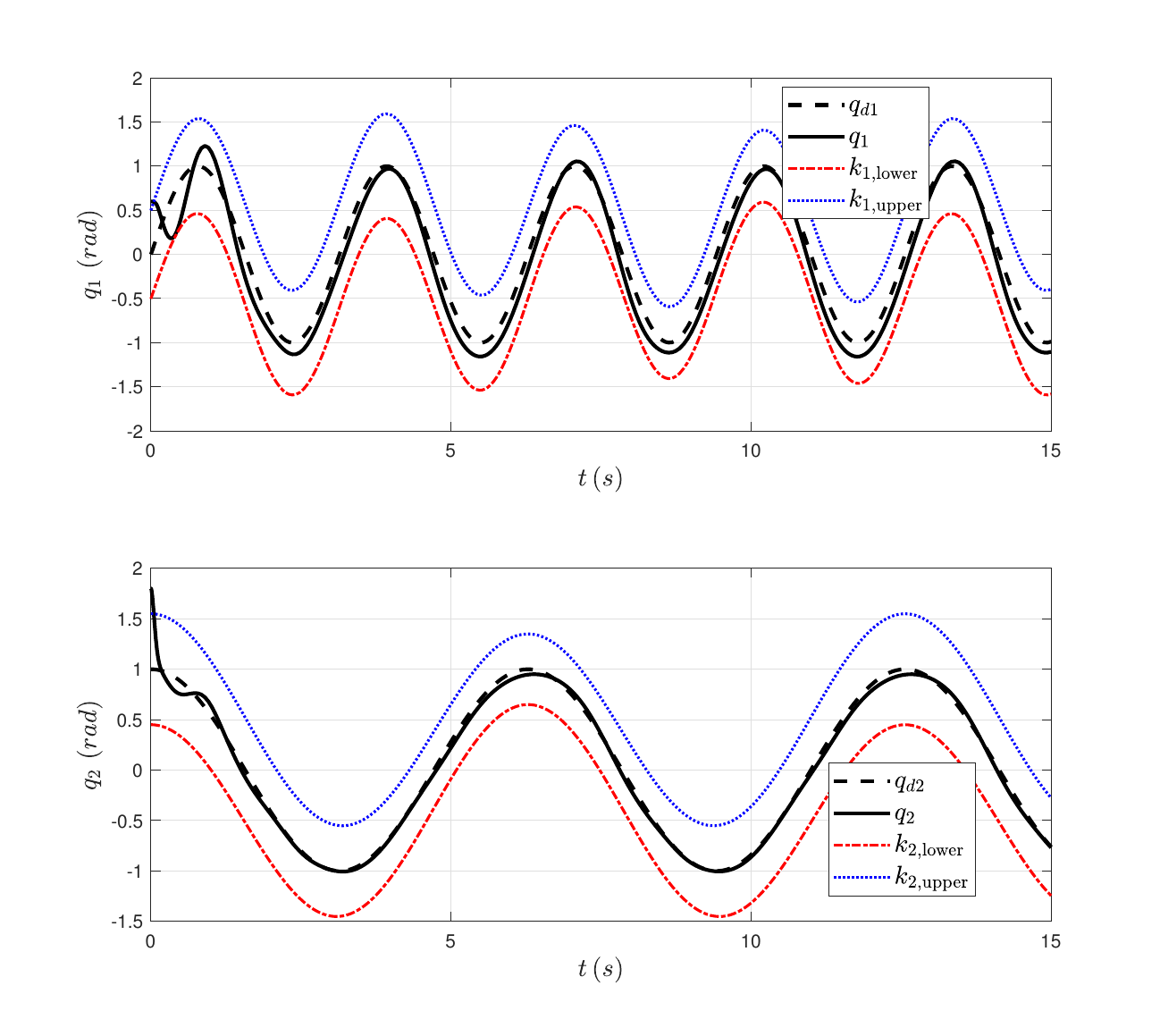}
  \caption{Joint Position Tracking with Constraints. The desired trajectory \(q_d(t)\) (dashed) and the actual joint positions \(q(t)\) (solid) are plotted along with the constraint boundaries \(q_{d,i}(t)\pm k_{c,i}(t)\) (red dashed), where \(k_{c1}(t)=0.5+0.1\sin(0.5t)\) and \(k_{c2}(t)=0.45+0.1\cos(0.5t)\).}
  \label{fig:q}
\end{figure}

\begin{figure}[!t]
  \centering
  \includegraphics[width=\columnwidth]{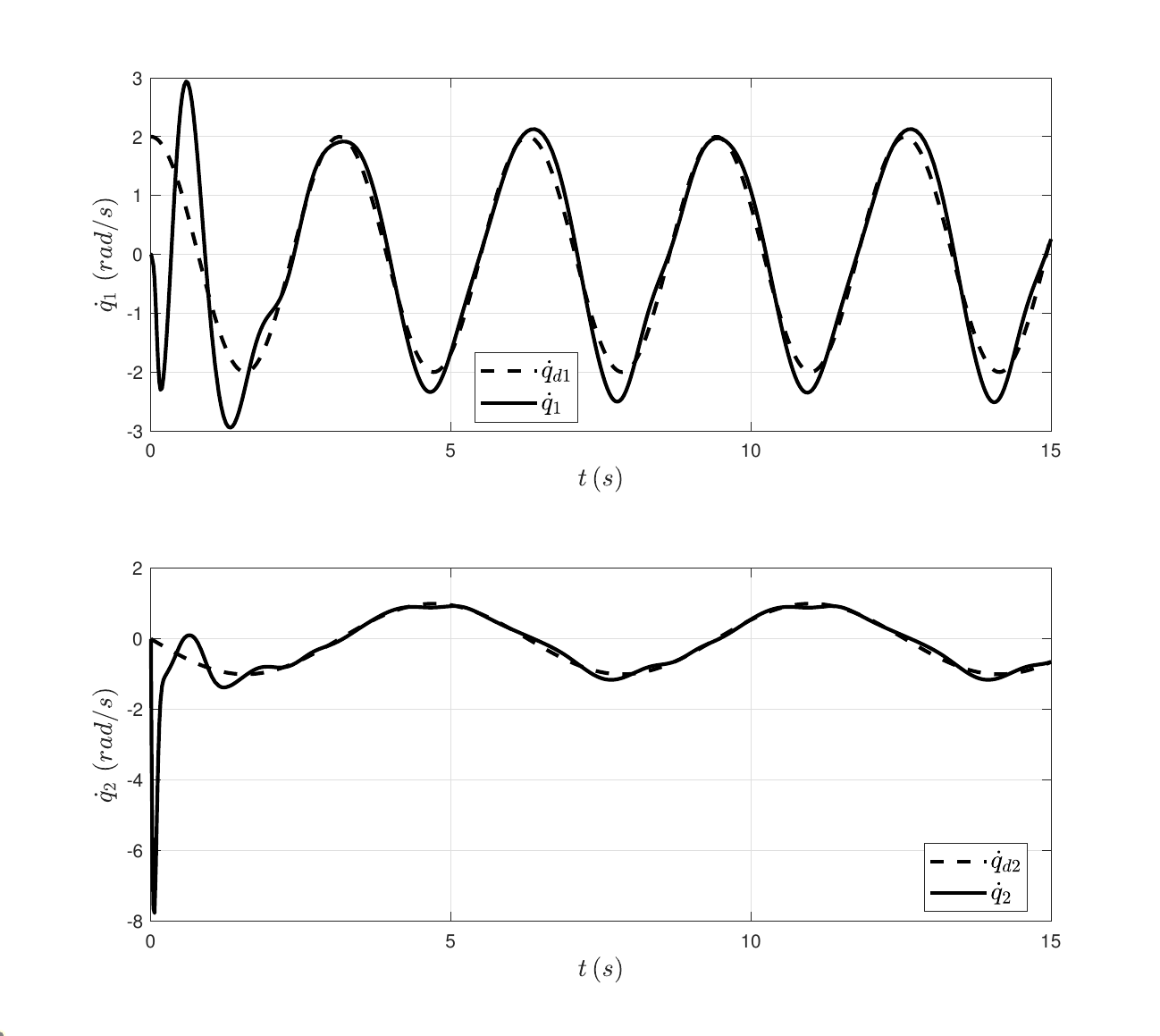}
  \caption{Joint Velocity Tracking. The desired velocities (dashed) and actual velocities (solid) are shown for both joints.}
  \label{fig:qdot}
\end{figure}

\begin{figure}[!t]
  \centering
  \includegraphics[width=\columnwidth]{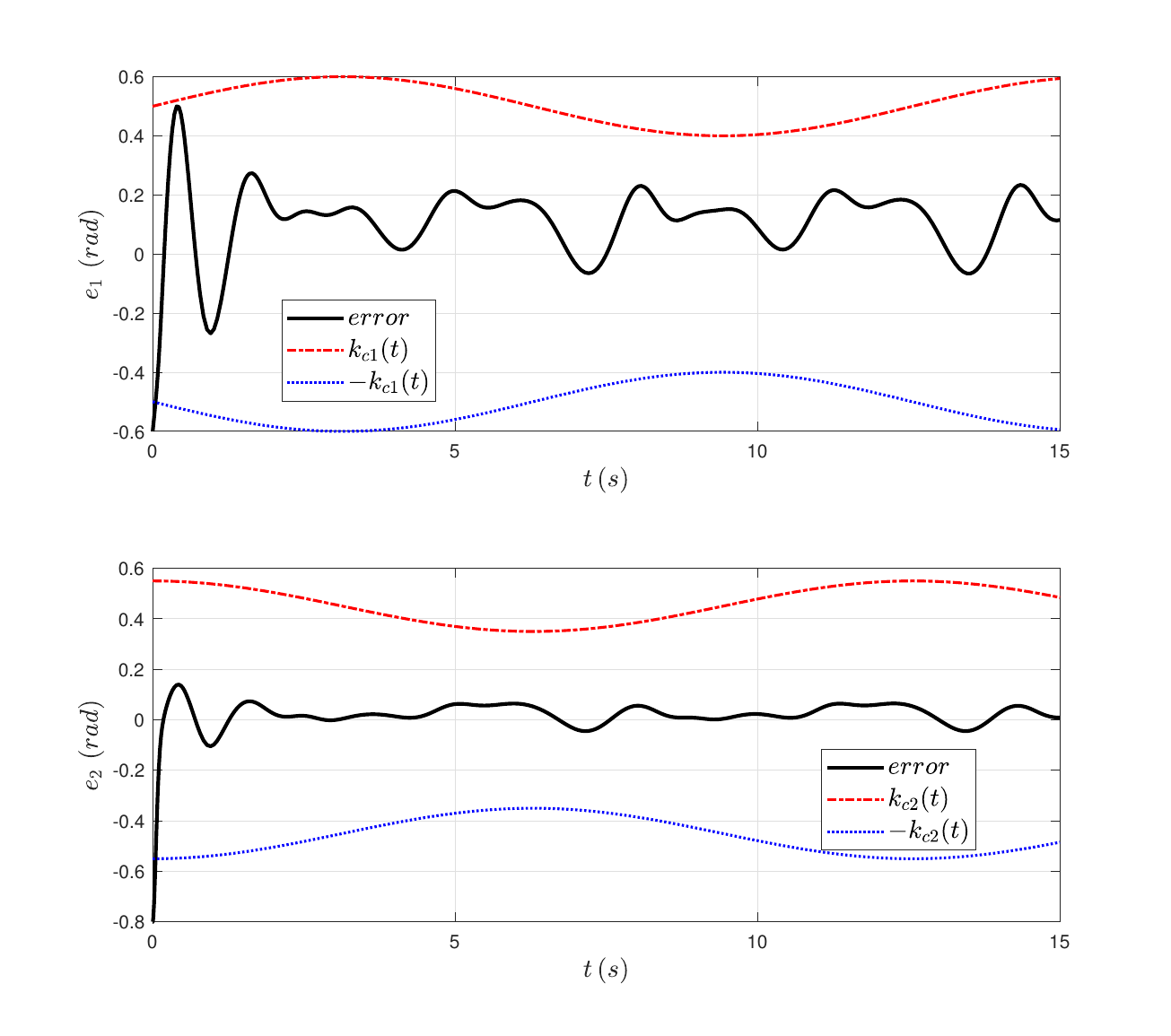}
  \caption{Position Tracking Errors with Constraint Boundaries. The tracking errors \(Z_{1i}(t)=q_i(t)-q_{d,i}(t)\) are plotted together with the error bounds \(\pm k_{c,i}(t)\) for \(i=1,2\).}
  \label{fig:e}
\end{figure}

\begin{figure}[!t]
  \centering
  \includegraphics[width=\columnwidth]{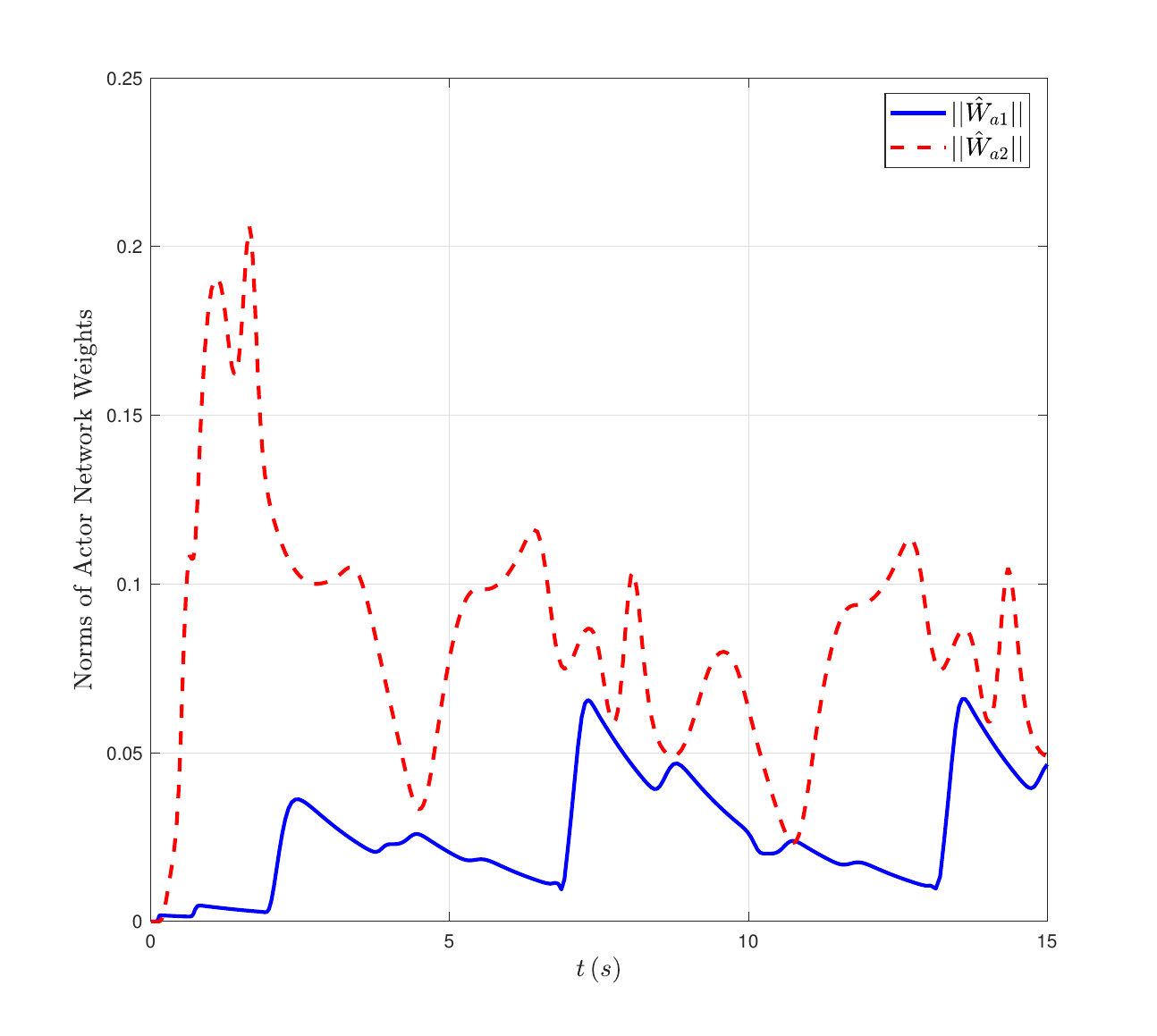}
  \caption{Norms of actor networks \(\hat{W}_{ai}(t)\) for both joint are plotted.}
  \label{fig:wa}
\end{figure}

\begin{figure}[!t]
  \centering
  \includegraphics[width=\columnwidth]{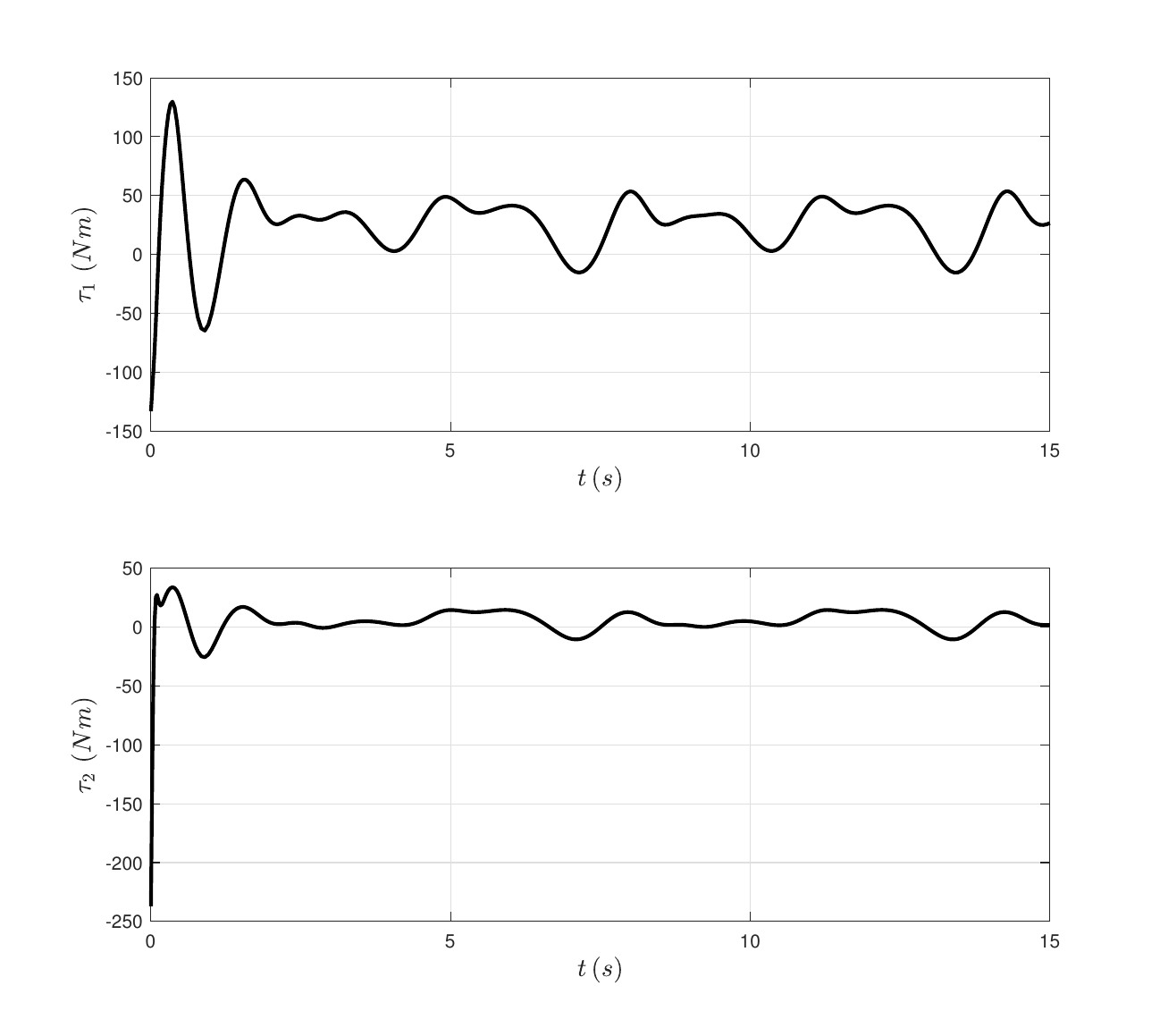}
  \caption{Input Torques. The control inputs \(\tau_i(t)\) for each joint are plotted.}
  \label{fig:tau}
\end{figure}

Figures~\ref{fig:q}--\ref{fig:tau} present representative simulation results. Figure~\ref{fig:q} shows that the joint positions \(q(t)\) follow the desired trajectories \(q_d(t)\). Figure~\ref{fig:qdot} depicts the joint velocities \(\dot{q}(t)\) compared with their desired values. Figure~\ref{fig:e} illustrates the evolution of the tracking error \(Z_1(t)\); note that although the initial tracking error violates the prescribed bounds, the deferred constraint activation via the shifting function ensures that the transformed error \(Z_1^\gamma(t)=\gamma(t)Z_1(t)\) converges within the bounds for \(t\ge2\,\mathrm{s}\). Moreover, Figure~\ref{fig:wa} shows the evolution of the norms of the actor network weights, confirming that the adaptation process remains bounded. Finally, Figure~\ref{fig:tau} displays the control input \(\tau(t)\). These results demonstrate that all closed-loop signals remain bounded and that the tracking errors converge within the prescribed time-varying constraints despite the initial violation.

\section{Conclusion}
This paper proposed a learning based neuroadaptive control framework for robotic manipulators under deferred constraints. By integrating a smooth zone barrier function with a prescribed-time shifting function and an actor–critic learning scheme, the controller achieves precise tracking while ensuring all closed-loop signals remain bounded and constraints are satisfied. Numerical simulations on a two-link manipulator confirm the method's effectiveness. Future work will extend the approach to more complex systems and include experimental investigations to further validate and enhance its robustness and energy efficiency.

 \bibliographystyle{IEEEtran}

\end{document}